\colorlet{shadecolor}{gray!10}
\newtheorem{problem}{Problem}
\newtheorem{theorem}{Theorem}
\newtheorem{proposition}{Proposition}
\newtheorem{corollary}{Corollary}
\newtheorem{lemma}{Lemma}
\let\oldthempfootnote\thempfootnote
\def\thempfootnote{\text{\oldthempfootnote}}
\newcommand{\norm}[2]{\|#1\|_{#2}}
\newcommand{\frob}[1]{\|#1\|_{\scriptscriptstyle F}}
\NewDocumentCommand\bbm{}{ \begin{bmatrix} }
\NewDocumentCommand\ebm{}{ \end{bmatrix} }
\NewDocumentCommand\Vector{m}{ \boldsymbol{\mathbf{#1}} }
\NewDocumentCommand\Matrix{m}{ \boldsymbol{\mathbf{#1}} }
\NewDocumentCommand\Trace{m}{ \mathrm{tr}\left(#1\right) }
\NewDocumentCommand\Norm{m}{\left\Vert#1\right\Vert }
\NewDocumentCommand\Vectorize{m}{ \mathrm{vec}\left(#1 \right) }
\NewDocumentCommand\Real{}{ \mathbb{R} }
\NewDocumentCommand\Sym{}{ \mathbb{S} }
\NewDocumentCommand\LieGroupSO{m}{ \mathrm{SO}(#1) }
\NewDocumentCommand\LieGroupO{m}{ \mathrm{O}(#1) }
\NewDocumentCommand\LieGroupSE{m}{ \mathrm{SE}(#1) }
\NewDocumentCommand\Identity{}{ \Matrix{I} }
\NewDocumentCommand\CoordinateFrame{m}{ \underrightarrow{\Matrix{\mathcal{F}}}_{#1} }
\NewDocumentCommand\T{}{\mathsf{T}}
\NewDocumentCommand\LagrangianHessian{}{\Matrix{\mathcal{Q}}}
\NewDocumentCommand\Span{}{ \mathrm{span} }
\NewDocumentCommand\Col{}{ \mathrm{col} }
\NewDocumentCommand\Rank{}{ \mathrm{rank} }
\title{\Large \bf Certifiably Optimal Monocular Hand-Eye Calibration}
\author{Emmett Wise$^{\dagger}$, Matthew Giamou$^{1,\dagger}$, Soroush Khoubyarian, Abhinav Grover, and Jonathan Kelly
\thanks{$^\dagger$ Denotes equal contribution.}
\thanks{All authors are with the Space \& Terrestrial Autonomous Robotic Systems (STARS) Laboratory at the University of Toronto Institute for Aerospace Studies (UTIAS), Toronto, Canada. {\tt\footnotesize <firstname>.<lastname>@robotics.utias.utoronto.ca}}
\thanks{$^{1}$ Vector Institute Postgraduate Affiliate and RBC Fellow.}}
\begin{document}
\maketitle

\begin{abstract}
Correct fusion of data from two sensors requires an accurate estimate of their relative pose, which can be determined through the process of \emph{extrinsic calibration}. When the sensors are capable of producing their own egomotion estimates (i.e., measurements of their trajectories through an environment), the `hand-eye' formulation of extrinsic calibration can be employed. In this paper, we extend our recent work on a convex optimization approach for hand-eye calibration to the case where one of the sensors cannot observe the scale of its translational motion (e.g., a monocular camera observing an unmapped environment). We prove that our technique is able to provide a certifiably globally optimal solution to both the known- and unknown-scale variants of hand-eye calibration, provided that the measurement noise is bounded. Herein, we focus on the theoretical aspects of the problem, show the tightness and stability of our convex relaxation, and demonstrate the optimality and speed of our algorithm through experiments with synthetic data.
\end{abstract}

\section{Introduction}
\label{sec:intro}

Many autonomous mobile robots perceive their environments by fusing noisy measurements from multiple sensors. 
While certain high-end devices such as 3D lidar units are able to provide a fairly complete perception solution on their own, the use of multiple sensors confers the ability to leverage complementary modalities (e.g., rich colour camera images and high-rate inertial measurements) to improve reliability and robustness.
For safety-critical applications, these properties are often a necessity, but come at the cost of greater complexity: the quality of the robot's map and trajectory estimates depends directly on accurate knowledge of the rigid-body transformation between each pair of sensor reference frames (see \Cref{fig:sensor_geometry}).

The process of determining the rigid transformation between sensor reference frames is typically referred to as \emph{extrinsic calibration}. 
 Robot manufacturers often provide an estimate of this transformation by performing factory calibration using expensive custom instruments. 
However, many end-users will either augment their platforms with additional sensors or modify the spatial configuration of existing sensors to suit their specific needs.
Additionally, operation of the robot inevitably leads to unintentional changes to the extrinsic transformation, either due to gradual structural and material effects like thermal expansion and metal fatigue, or from impacts caused by collisions.
Since autonomous robots operating in the field do not have access to factory calibration systems, automatic extrinsic calibration algorithms have received a great deal of attention from researchers. 
Proposed methods vary in both their generality (e.g., specificity of the combination of sensors involved), and in the assumptions they make about the calibration environment (e.g., requiring the use of inexpensive calibration `targets' like checkerboards \cite{qilongzhang_extrinsic_2004} or common architectural features like room corners \cite{gomez-ojeda_extrinsic_2015}). 

\begin{figure}[t]
	\centering
	\includegraphics[width=0.98\columnwidth]{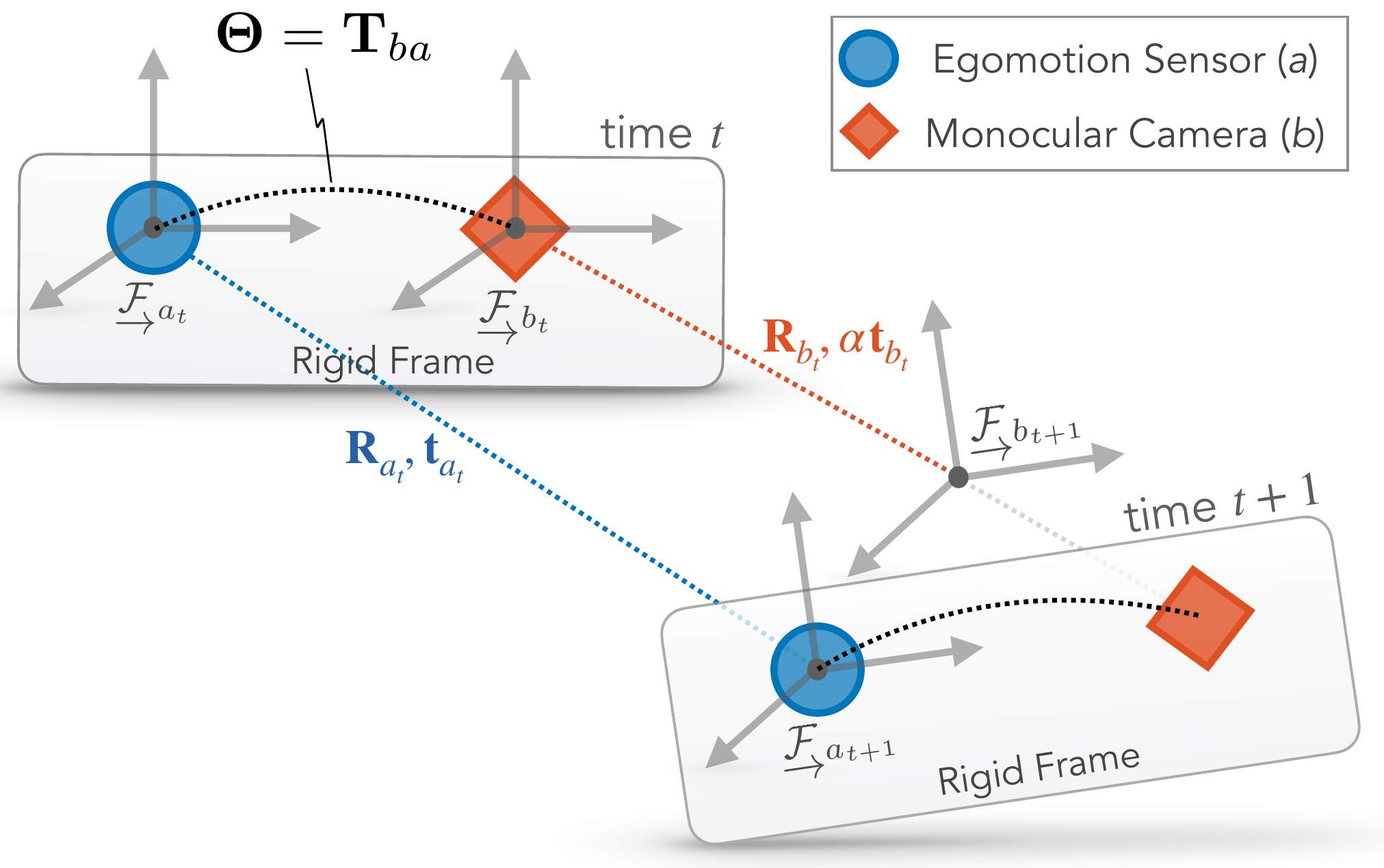}
	\caption{We perform extrinsic calibration between a sensor ($a$) that is able to provide egomotion estimates with known scale and a monocular camera ($b$) that cannot observe the scale, $\alpha$, of its translational motion. A quadratically-constrained quadratic program (QCQP) formulation of hand-eye calibration is relaxed into a convex semidefinite program (SDP) which can be efficiently solved to yield a globally optimal estimate of the extrinsic transformation matrix $\Matrix{\Theta}$. The use of a convex relaxation permits \emph{certification} of global optimality; we prove that the relaxation is guaranteed to be tight when measurement noise is bounded.}
	\label{fig:sensor_geometry}
	\vspace{-0.4cm}
\end{figure}

The term \emph{hand-eye calibration} refers to determining the extrinsic transformation between a robotic manipulator and a sensor (typically a camera) mounted on or held by the manipulator. 
However, the common ${\Matrix{A}\Matrix{X} = \Matrix{X}\Matrix{B}}$ formulation of this problem can be applied to any sensor that can estimate its egomotion, including stereo cameras, 3D lidar units, and GNSS-INS devices \cite{brookshire2012automatic, brookshire2013extrinsic}.
Throughout this work, we refer to the sensor-agnostic ${\Matrix{A}\Matrix{X} = \Matrix{X}\Matrix{B}}$ problem illustrated in \Cref{fig:sensor_geometry} as hand-eye calibration. 
Crucially, we extend the certifiably optimal hand-eye calibration method in \cite{giamou2019_certifiably} so that one of the sensors need not measure the \emph{scale} of its translation correctly (in a metric sense).  
This case is important because it permits the use of our algorithm with sensor pairs that include a monocular camera (observing an unmapped environment) \cite{chiuso2002_structure}. 
Our problem formulation requires only that there are sufficient geometric features in the scene for each sensor to produce a (scaled, in the case of the camera) egomotion estimate. The main contributions of our work are:

\begin{enumerate}
\item an extension of the egomotion-based calibration problem formulation in \cite{giamou2019_certifiably} to include a monocular camera as one of the sensors,
\item a fast and certifiably optimal solution method using a convex semidefinite programming (SDP) relaxation,
\item a proof of the global optimality of our method in the presence of sensor measurement error, and
\item an open source empirical evaluation of our algorithm on synthetic data.\footnote[2]{See \url{https://github.com/utiasSTARS/certifiable-calibration} for Python code implementing our algorithm and experiments.}
\end{enumerate}

\noindent Our proof of global optimality in the presence of noise, also called SDP `stability' \cite{cifuentes_local_2018}, is the first of its kind for a hand-eye calibration algorithm. In \Cref{sec:related_work} we survey traditional extrinsic calibration methods that do not have this guarantee and do not provide certificates of (sub)optimality, making them susceptible to poor performance in the presence of noise. Our notation and problem formulation are introduced in \Cref{sec:formulation}, followed by our proposed solution method in \Cref{sec:solution}. In \Cref{sec:theory}, our theoretical results are presented. \Cref{sec:experiments} presents experiments validating our theoretical claims. We offer concluding remarks and ideas for future work in \Cref{sec:conclusion}.

\section{Related Work} 
\label{sec:related_work}

We begin with a brief survey of hand-eye calibration algorithms. This well-known problem has been studied since the 1980s and we direct readers to the short literature reviews in \cite{heller2014hand} and \cite{hu_novel_2019} for more information on recent approaches. In \Cref{sec:certifiable_algorithms}, we summarize the state of the art in the application of convex relaxation techniques to estimation problems in computer vision and robotics.  

\subsection{Hand-Eye Calibration}
\label{sec:handeye_algorithms}

Much of the early research on hand-eye calibration explored fast, closed-form solutions appropriate for the limited computational resources available at the time. These methods are tailored to the literal hand-eye case, where a robot arm with accurate forward kinematics moves a camera, usually in front of a known calibration target \cite{tsai_new_1989}. 
A dual quaternion-based formulation is explored in \cite{daniilidis_hand-eye_1999} and the advantages of coupling translation and rotation estimation are reviewed. 
The experimental investigation in \cite{horaud1995hand} concludes that nonlinear optimization approaches that couple translation and rotation estimation, like the algorithm developed in this paper, provide more accurate solutions in the presence of noise than the simpler but decoupled, closed-form methods.
Our problem formulation is similar to the one in \cite{andreff_robot_2001}, where observability criteria and different solution methods are presented in a systematic fashion. The unknown scale case is also studied in \cite{wei2018calibration}, where a second-order cone programming solution without optimality guarantees is proposed.

Recent research extends the hand-eye formulation to generic robotic platforms (e.g., self-driving vehicles \cite{walters_robust_2019}) and noisy egomotion measurements. 
Principled probabilistic (i.e., maximum likelihood) formulations of hand-eye calibration are the subject of \cite{brookshire2012automatic} and \cite{brookshire2013extrinsic}. 
A similar approach is applied to a related multi-robot calibration problem in \cite{ma_probabilistic_2016}. 
Our technique eschews a probabilistic cost function in order to leverage the simplicity of a classic geometric formulation, however combining the two approaches is a promising future direction. 

\subsection{Convex Relaxations and Certifiable Algorithms} 
\label{sec:certifiable_algorithms}

Convex SDP relaxations have become a popular solution method for a number of geometric estimation problems including rotation averaging \cite{fredriksson2012simultaneous, eriksson2018rotation, yang_teaser_2020}, SLAM \cite{rosen2016se, briales2017cartan, fan_efficient_nodate}, registration \cite{briales2017convex, olsson2008solving}, relative pose estimation \cite{briales_certifiably_2018, garcia-salguero_certifiable_2020, zhao_efficient_2019},  and hand-eye calibration \cite{heller2014hand, giamou2019_certifiably}. 
In \cite{yang_teaser_2020}, the authors promote a `certifiable perception manifesto' in which they argue that the global (sub)optimality certificates provided by these methods are essential for autonomous robotic systems that must operate reliably and safely in human-centric environments.
Similarly, we believe that accurate and automatic extrinsic calibration is essential for safe operation, especially for robots expected to function autonomously for long periods of time. 
Along with \cite{heller2014hand} and \cite{giamou2019_certifiably}, our approach is a \emph{certifiable algorithm} for hand-eye calibration. Our method advances the state of the art by including the monocular camera case and by providing the analytic global optimality guarantees presented in \Cref{sec:theory}.

\section{Problem Formulation}
\label{sec:formulation}

In this section, we introduce our notation and formulate our problem by extending the known-scale case found in \cite{giamou2019_certifiably}.

\subsection{Notation}
\label{sec:notation}

Lower case Latin and Greek letters (e.g., $a$ and $\alpha$) represent scalar variables, while boldface lower and upper case letters (e.g., $\Vector{x}$ and $\Matrix{\Theta}$) represent vectors and matrices respectively. 
The superscript in $\Matrix{A}^{(i)}$ indicates the $i$th column of the matrix $\Matrix{A}$. A three-dimensional reference frame is designated by $\CoordinateFrame{}$.
The translation vector from point $b$ (often a reference frame origin) to $a$, represented in $\CoordinateFrame{b}$, is denoted $\Vector{t}_b^{ab}$.
We denote rotation matrices by $\Matrix{R}$; for example, $\Matrix{R}_{ba} \in \text{SO}(3)$ describes the rotation from $\CoordinateFrame{a}$ to $\CoordinateFrame{b}$. We reserve $\Matrix{T}$ for $\LieGroupSE3$ transformation matrices; for example, $\Matrix{T}_{ba}$ is the the matrix that defines the rigid-body transformation from frame $\CoordinateFrame{a}$ to $\CoordinateFrame{b}$.
The binary $\otimes$ operator denotes the matrix Kronecker product.
The unary operator $\wedge$ acts on $\Vector{r} \in \Real^3$ to produce a skew symmetric matrix such that $\Vector{r}^\wedge\Vector{s}$ is equivalent to the cross product $\Vector{r}\times \Vector{s}$.
The function $\Vectorize{\cdot}$ unwraps the columns of matrix $\Matrix{A} \in \Real^{m\times n}$ into a vector $\Vector{a} \in \Real^{mn}$.

\subsection{Rigid Rotations and Transformations}
As discussed in Section \ref{sec:notation}, the rotation between two reference frames is represented as an element of $\LieGroupSO{3}$. The special orthogonal group $\LieGroupSO{3}$ can be described in terms of quadratic constraints \cite{ivancevic2011lecture}:
\begin{equation}\label{eq:SO3con}
\begin{aligned}
\LieGroupSO{3}: \; & \Matrix{R} \in \Matrix{\Real}^{3 \times 3}, \\
 \text{s.t.}{} \quad & \Matrix{R}^\T\Matrix{R}=\Matrix{R}\Matrix{R}^\T =\Identity, \\
 & \Matrix{R}^{(i)} \times \Matrix{R}^{(j)} = \Matrix{R}^{(k)}, \; i,j,k \in \text{cyclic}(1,2,3).   
\end{aligned}
\end{equation}

Similar to rotations, the rigid transformation between two rotated and translated reference frames $\CoordinateFrame{b}$ and $\CoordinateFrame{a}$ is an element of the special Euclidean group $\LieGroupSE3$ and is also defined by quadratic constraints \cite{ivancevic2011lecture}:
\begin{equation}
\begin{aligned}
\LieGroupSE{3}: \; & \Matrix{T}_{ba} \in \Matrix{\Real}^{4 \times 4}, \\[1mm]
 \text{s.t.} \; & \Matrix{T}_{ba} = \bbm \Matrix{R}_{ba} & \Vector{t}_b^{ab} \\ \Vector{0}_{1\times3} & 1 \ebm, \\[1mm]
 & \Matrix{R}_{ba} \in \LieGroupSO{3},\; \Vector{t}_b^{ab} \in \Matrix{\Real}^3.
\end{aligned}
\end{equation}  

\subsection{Extrinsic Calibration}

We denote two rigidly connected sensor reference frames at time $t$ as $\CoordinateFrame{a_t}$ and $\CoordinateFrame{b_t}$. 
Additionally, we introduce an arbitrary fixed inertial world frame $\smash{\CoordinateFrame{w}}$. 
Since the sensors are rigidly connected, there is some constant $\Matrix{\Theta} \triangleq \Matrix{T}_{ba} \in \LieGroupSE{3}$ that describes the transformation between $\CoordinateFrame{b_t}$ and $\CoordinateFrame{a_t}$:
\begin{equation} \label{eq:extrinsic_calibration}
\Matrix{T}_{w a_t} = \Matrix{T}_{w b_t}\Matrix{\Theta} \quad \forall \; t.
\end{equation}
With the basic algebraic manipulation presented in \cite{giamou2019_certifiably}, we can derive the central `${\Matrix{AX} = \Matrix{XB}}$' equation used in hand-eye calibration (see \Cref{fig:sensor_geometry}):
\begin{equation}
\Matrix{T}_{b_tb_{t+1}}\Matrix{\Theta} = \Matrix{\Theta}\Matrix{T}_{a_ta_{t+1}}.
\end{equation}

\subsection{Monocular Camera Egomotion}

Monocular camera measurements in an unmapped environment can be used estimate camera pose up to scale \cite{chiuso2002_structure}.
Thus, we can define the camera's incremental egomotion by
\begin{equation} \label{eq:cam_ob}
\Matrix{T}_{b_tb_{t+1}} = \bbm \Matrix{R}_{b_tb_{t+1}} & \alpha\Vector{t}^{b_{t+1}b_t}_{b_t} \\[1mm] \Vector{0}_{1\times3} & 1 \ebm,
\end{equation}
where $\alpha$ is an unknown (unobservable) scaling factor. 

\subsection{QCQP Formulation}

Assuming that the second sensor is able to measure its egomotion, $\Matrix{T}_{a_ta_{t+1}}$, the extrinsic calibration problem can be described as one of minimizing the magnitude over the error matrices from $T$ time steps:
\begin{equation} \label{eqn:error_eqn}
\Matrix{E}_t = \Matrix{\Theta}_{ba}\Matrix{T}_{a_ta_{t+1}} - \Matrix{T}_{b_tb_{t+1}}\Matrix{\Theta}_{ba} \in \LieGroupSE{3},\ t = 1, ..., T.
\end{equation}
To make our notation more compact, we denote $\Matrix{R}_{ba}$ as $\Matrix{R}$, $\Vector{t}_b^{ab}$ as $\Vector{t}$, $\Matrix{R}_{a_ta_{t+1}}$ as $\Matrix{R}_{a_t}$, and $\Vector{t}_{a_t}^{a_{t+1}a_t}$ as $\Vector{t}_{a_t}$ (and likewise for $b$'s motion). 
Minimizing the sum of the squared Frobenius norm of each $\Matrix{E}_t$ produces a QCQP:
\begin{problem}{QCQP formulation of Hand-Eye Calibration.}\label{prob:qcqp_formulation}
	\begin{equation}
\begin{aligned}
\min_{\Matrix{R}, \Vector{t}, \alpha} \; & J_{\Matrix{t}} + J_{\Matrix{R}}, \\
\text{\emph{s.t.}} \; & \Matrix{R} \in \LieGroupSO{3},
\end{aligned}
\end{equation}
where
\begin{equation*}
\begin{aligned}
J_{\Matrix{R}} &= \sum_{t=1}^T \frob{\Matrix{R}\Matrix{R}_{a_t} - \Matrix{R}_{b_t}\Matrix{R}}^2, \\
J_{\Matrix{t}} &= \sum_{t=1}^T \norm{\Matrix{R}\Vector{t}_{a_t} + \Vector{t}
 - \Matrix{R}_{b_t}\Vector{t} - \alpha\Vector{t}_{b_t}}{2}^2.
\end{aligned}
\end{equation*}
\end{problem}
\noindent Using the identity $\Matrix{A}\Matrix{X}\Matrix{B} = (\Matrix{B}^\T \otimes \Matrix{A})\Vectorize{\Matrix{X}}$ \cite{fackler2005notes}, the cost function of \Cref{prob:qcqp_formulation} can be converted into a standard quadratic form: 
\begin{equation}
\begin{aligned}
\Matrix{M}_{\Matrix{R}_t} &= \bbm \Matrix{0}_{9\times4} & \Matrix{R}_{a_t}^\T \otimes \Identity_{3\times3} - \Identity_{3\times3} \otimes \Matrix{R}_{b_t} \ebm, \\
J_{\Matrix{R}} &= \sum_{t=1}^T \Vector{x}^\T\Matrix{M}_{\Matrix{R}_t}^\T\Matrix{M}_{\Matrix{R}_t}\Vector{x}, \\
\Matrix{M}_{\Vector{t}\alpha_t} &= \bbm \Identity_{3\times3} - \Matrix{R}_{b_t} & -\Vector{t}_{b_t} & \Vector{t}_{a_t}^\T\otimes\Identity_{3\times3} \ebm, \\ 
J_{\Matrix{t}} &= \sum_{t=1}^T \Vector{x}^\T\Matrix{M}_{\Vector{t}\alpha_t}^\T\Matrix{M}_{\Vector{t}\alpha_t}\Vector{x},\\
\Vector{x}^\T &= \bbm \Vector{t}^\T & \alpha & \Vector{r}^\T \ebm, \\
J_{\Matrix{R}} + J_{\Matrix{t}} &= \Vector{x}^\T\Matrix{Q}\Vector{x},
\end{aligned}
\end{equation}
where $\Vector{r}^\T = \Vectorize{\Matrix{R}}$ and the symmetric cost matrix $\Matrix{Q}$ can be subdivided into 
\begin{equation} \label{eq:cost_matrix}
	\Matrix{Q} = \bbm \Matrix{Q}_{\Vector{t}\alpha} & \Matrix{Q}_{\Vector{t}\alpha,\Vector{r}} \\ \Matrix{Q}_{\Vector{t}\alpha,\Vector{r}}^\T & \Matrix{Q}_{\Vector{r}} \ebm. \\
\end{equation}
Given an optimal rotation matrix $\Matrix{R}^\star$, the unconstrained optimal translation vector $\Vector{t}^\star$ and scale $\alpha^\star$ can be recovered by solving the linear system induced by \Cref{eq:cost_matrix}:
\begin{equation} \label{eq:recover_trans}
\bbm \Vector{t}^\star & \alpha^\star \ebm = -\Matrix{Q}_{\Vector{t}\alpha}^{-1}\Matrix{Q}_{\Vector{t}\alpha,\Vector{r}}\Vector{r}^\star.
\end{equation}
This allows us to use the Schur complement to reduce the cost matrix to one that does not include $\Vector{t}$ and $\alpha$ \cite{briales2017convex}:
\begin{equation}
    \tilde{\Matrix{Q}} = \Matrix{Q}_{\Vector{r}} - \Matrix{Q}_{\Vector{t}\alpha,\Vector{r}}^\T\Matrix{Q}_{\Vector{t}\alpha}^{-1}\Matrix{Q}_{\Vector{t}\alpha,\Vector{r}}.
\end{equation}
A reduced form of \Cref{prob:qcqp_formulation} can then be defined that only includes the rotation variable:
\begin{problem}{Reduced QCQP Formulation of Hand-Eye Calibration.}\label{prob:qcqp_reduced}
\begin{equation}
\begin{aligned}
\min_{\Vector{r}=\Vectorize{\Matrix{R}}} \; & \Vector{r}^\T\tilde{\Matrix{Q}}\Vector{r},\\
\text{\emph{s.t.}} \; & \Matrix{R} \in \LieGroupSO{3}.
\end{aligned}
\end{equation} 
\end{problem}

\subsection{Homogenization} \label{sec:homogenization}
In order to simplify the convex Lagrangian dual relaxation of \Cref{prob:qcqp_formulation} in \Cref{sec:solution}, the constraints given by (\ref{eq:SO3con}) can be homogenized with scalar variable $y$:
\begin{equation}\label{eq:homog_qcqp}
\begin{aligned}
\Matrix{R}^\T\Matrix{R} &= y^2\Identity, \\
\Matrix{R}\Matrix{R}^\T &= y^2\Identity, \\
\Vector{R}^{(i)} \times \Vector{R}^{(j)} &= y\Vector{R}^{(k)}, \; i,j,k \in \text{cyclic}(1,2,3), \\
y^2 &= 1.\\ 
\end{aligned}
\end{equation}
This forms a set of $22$ homogeneous quadratic equality constraints (six for each orthogonal constraint, three from each cyclic cross product, and one from the homogenizing variable $y$). 
This also requires augmenting the state to include $y$ such that $\tilde{\Vector{r}}^\T =[\Vectorize{\Matrix{R}}^\T \ y\,]$, and padding $\tilde{\Matrix{Q}}$ with zeros. 

\section{Solving the Non-Convex QCQP}
\label{sec:solution}

\Cref{sec:formulation} described the extrinsic calibration problem as a nonconvex homogeneous QCQP. 
To transform this problem into a convex SDP that is easier to solve, we derive its Lagrangian dual relaxation using the standard procedure outlined in \cite{boyd2004convex}: see \cite{giamou2019_certifiably} and \cite{Briales_2017} for detailed treatments of QCQPs similar to ours. 

\subsection{Lagrangian Dual}
\label{Dual}

 Using the homogenized $\LieGroupSO3$ constraints of \Cref{eq:homog_qcqp} from \Cref{sec:homogenization}, the Lagrangian function $L(\Vector{\tilde{r}}, \Vector{\nu})$ of \Cref{prob:qcqp_formulation} has the form
\begin{equation} \label{eq:lagrangian_dual}
\begin{aligned}
L(\Vector{\tilde{r}}, \Vector{\nu}) &= \nu_{y} +\Vector{\tilde{r}}^\T\Matrix{Z}\Vector{\tilde{r}}, \\
\Matrix{Z}(\Vector{\nu}) &= \tilde{\Matrix{Q}} + \Matrix{P}_1(\Vector{\nu}) + \Matrix{P}_2(\Vector{\nu}),
\end{aligned}
\end{equation}
where 
\begin{equation} \label{eq:dual_constraints}
\begin{aligned}
 \Matrix{P}_1(\Vector{\nu})  &= \bbm - \Matrix{\mathcal{V}}_1 \otimes \Identity_{3\times3} -\Identity_{3\times3} \otimes \Matrix{\mathcal{V}}_2 & \Matrix{0}_{9\times 1} \\ \Matrix{0}_{1\times 9} & \Trace{\Matrix{\mathcal{V}}_1} + \Trace{\Matrix{\mathcal{V}}_2} \ebm, \\[1mm]
\Matrix{P}_2(\Vector{\nu}) &=  \bbm \Matrix{0}_{3\times3} & -\Vector{\nu}_{ijk}^{\wedge} & \Vector{\nu}_{kij}^{\wedge} & -\Vector{\nu}_{jki}\\
 \Vector{\nu}_{ijk}^{\wedge} & \Matrix{0}_{3\times3} & -\Vector{\nu}_{jki}^{\wedge} & -\Vector{\nu}_{kij} \\
 -\Vector{\nu}_{kij}^{\wedge} & \Vector{\nu}_{jki}^{\wedge} & \Matrix{0}_{3\times3} & -\Vector{\nu}_{ijk} \\ 
 -\Vector{\nu}_{jki}^\T & -\Vector{\nu}_{kij}^\T & -\Vector{\nu}_{ijk}^\T & -\nu_{y} \ebm, \\[1mm]
 \Matrix{\mathcal{V}}_1 &, \, \Matrix{\mathcal{V}}_2 \in \Sym^3, \;
\Vector{\nu}_{ijk}, \, \Vector{\nu}_{jki}, \, \Vector{\nu}_{kij} \in \Vector{\mathbb{R}}_3,
\end{aligned}
\end{equation}
and where $\Sym^3$ is the set of all 3$\times$3 real symmetric matrices and $\Vector{\nu} \in \Real^{22}$ is a vector containing all dual variables.
Next, we minimize the Lagrangian function with respect to $\Vector{\tilde{r}}$: \begin{equation}
\min_{\Vector{\tilde{r}}}\; L(\Vector{\tilde{r}}, \Vector{\nu}) = \begin{cases} \nu_{y} & \Matrix{Z}(\Vector{\nu}) \succcurlyeq 0, \\
-\infty & \text{otherwise.}
\end{cases} 
\end{equation}
Finally, the Lagrangian dual problem is the following SDP:
\begin{problem}[Dual of \Cref{prob:qcqp_reduced}] \label{prob:dual_relaxation}
\begin{equation}
\begin{aligned}
\max_{\Vector{\nu}} \; & \nu_{y}, \\
\text{\emph{s.t.}} \; & \Matrix{Z}(\Vector{\nu}) \succcurlyeq 0,
\end{aligned}
\end{equation}
where $\Matrix{Z}(\Vector{\nu})$ is defined in \Cref{eq:lagrangian_dual,eq:dual_constraints}.
\end{problem}
\Cref{prob:dual_relaxation} can be efficiently solved with any generic interior-point solver for SDPs \cite{SeDuMi,andersen_mosek_2000,toh_sdpt3_1999}. 
Once we have found the optimal dual parameters $\Vector{\nu}^\star$, the primal solution can be found by examining the Lagrangian dual (\Cref{eq:lagrangian_dual}): because $\Matrix{Z}$ is positive semidefinite (PSD), the $\tilde{\Vector{r}}^\star$ that minimizes \Cref{eq:lagrangian_dual} lies in the nullspace of $\Matrix{Z}$ \cite{Briales_2017}. 
Since we enforce that $y = 1$, the optimal rotation is actually $\tilde{\Vector{r}}^\star/y^\star$, and $\Matrix{R}^\star$ can be recovered by horizontally stacking the columns of $\tilde{\Vector{r}}^\star/y^\star$, while $\Vector{t}^\star$ can be recovered with \Cref{eq:recover_trans}.
Crucially, our approach is \emph{certifiable} because a duality gap (i.e., the difference between the primal cost and the dual cost) of zero for a candidate solution pair $\Vector{\nu}^\star, \Vector{r}^\star$ is a post-hoc guarantee or certificate of its global optimality.

\section{SDP Tightness and Stability} \label{sec:theory}
In this section, we derive sufficient conditions for our convex relaxation-based approach to hand-eye calibration to be \emph{tight}, ensuring that a certifiably globally optimal solution to the primal problem can be extracted from the solution to its convex relaxation. Throughout this section, we will be dealing with a slightly modified version of \Cref{prob:qcqp_formulation}:
\begin{problem}[Orthogonal Relaxation of Hand-Eye Calibration] \label{prob:qcqp_orthogonal}
		\begin{equation}
\begin{aligned}
\min_{\Matrix{R}, \Vector{t}, \alpha} \; & J_{\Matrix{t}} + J_{\Matrix{R}}\\
\text{\emph{s.t.}} \; & \Matrix{R} \in \LieGroupO{3},
\end{aligned}
\end{equation}
where the cost function $J_{\Matrix{t}} + J_{\Matrix{R}}$ is the same as \Cref{prob:qcqp_formulation}'s and the feasible set is the group of orthogonal matrices $\LieGroupO{3} = \{\Matrix{R}: \Matrix{R}\Matrix{R}^\T = \Identity\}$.  
\end{problem}
\Cref{prob:qcqp_orthogonal} is a relaxation of \Cref{prob:qcqp_formulation} because we allow reflections in addition to rotations. An important consequence of this fact is that any instance of \Cref{prob:qcqp_orthogonal} which exhibits strong duality with a minimizer in $\LieGroupSO{3}$ will also exhibit strong duality if its constraints are `tightened' to limit its feasible set to $\LieGroupSO{3}$ (i.e., made into an instance of \Cref{prob:qcqp_formulation}). The smaller constraint set of \Cref{prob:qcqp_orthogonal} will make the proofs of strong duality simpler, and they will double as proofs of any formulation that adds constraints (e.g., \Cref{prob:qcqp_formulation} with or without redundant orthogonality constraints).
We will also make use of the known-scale version of hand-eye calibration found in \cite{giamou2019_certifiably}:
\begin{problem}[Hand-Eye Calibration with Known Scale] \label{prob:qcqp_known_scale}
		\begin{equation}
\begin{aligned}
\min_{\Matrix{R}, \Vector{t}} \; & J'_{\Matrix{t}} + J_{\Matrix{R}}\\
\text{\emph{s.t.}} \; & \Matrix{R} \in \LieGroupO{3},
\end{aligned}
\end{equation}
where 
\begin{equation}
	J'_{\Matrix{t}} = \sum_{t=1}^T \norm{\Matrix{R}\Vector{t}_{a_t} + \Vector{t}
 - \Matrix{R}_{b_t}\Vector{t} - \Vector{t}_{b_t}}{2}^2,
\end{equation}
which is identical to the translation cost terms in \Cref{prob:qcqp_formulation} except the scale $\alpha$ is omitted (i.e., it is assumed to be known and equal to 1). 
\end{problem}
The `known scale' case in \Cref{prob:qcqp_known_scale}, which applies to scenarios where both sensors measure their egomotion without scale ambiguity, is simpler to work with and will be used to extend the proofs in this section to the more general, monocular  formulation of \Cref{prob:qcqp_formulation}.

\begin{proposition}[Tightness of the Noise-Free Case] \label{prop:zero_duality_gap_known_scale}
Any noise-free instance of \Cref{prob:qcqp_orthogonal} exhibits strong duality (i.e., the duality gap is zero and the optimal solution can be extracted from its dual SDP solution).
\end{proposition}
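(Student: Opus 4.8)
The plan is to establish the two halves of strong duality---zero duality gap and extractability of the primal solution---by constructing an explicit dual certificate from the noise-free ground truth. First I would pin down what \emph{noise-free} buys us: the egomotion measurements are mutually consistent, so there exist $\Matrix{R}^\star \in \LieGroupSO{3} \subset \LieGroupO{3}$, $\Vector{t}^\star$, and a nonzero $\alpha^\star$ that simultaneously zero out every error matrix $\Matrix{E}_t$ of \Cref{eqn:error_eqn}. Since $J_{\Matrix{t}} + J_{\Matrix{R}}$ is a sum of squared residuals, the full cost matrix $\Matrix{Q} = \sum_t (\Matrix{M}_{\Matrix{R}_t}^\T\Matrix{M}_{\Matrix{R}_t} + \Matrix{M}_{\Vector{t}\alpha_t}^\T\Matrix{M}_{\Vector{t}\alpha_t})$ is positive semidefinite, and hence so is its Schur complement $\tilde{\Matrix{Q}}$ (the translation/scale block $\Matrix{Q}_{\Vector{t}\alpha}$ is positive definite, as \Cref{eq:recover_trans} already assumes it to be invertible). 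The homogenized ground truth $\tilde{\Vector{r}}^\star = [\,\Vectorize{\Matrix{R}^\star}^\T\ 1\,]^\T$ therefore attains zero cost and is a global primal minimizer, giving $p^\star = 0$ and $\tilde{\Matrix{Q}}\tilde{\Vector{r}}^\star = \Vector{0}$.

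With $p^\star = 0$ in hand I would treat the two requirements separately. Weak duality gives $d^\star \le p^\star = 0$, while the trivial dual point $\Vector{\nu} = \Vector{0}$ is feasible---$\Matrix{Z}(\Vector{0}) = \tilde{\Matrix{Q}} \succcurlyeq 0$ with objective $\nu_y = 0$---so the duality gap is immediately zero. The real content lies in the second requirement: recovering $\Matrix{R}^\star$ from $\ker \Matrix{Z}(\Vector{\nu}^\star)$, which needs a certificate whose nullspace is exactly $\Span\{\tilde{\Vector{r}}^\star\}$ rather than the possibly large $\ker\tilde{\Matrix{Q}}$. I would therefore construct a nontrivial dual-optimal $\Vector{\nu}^\star$; for the orthogonal relaxation this means selecting the symmetric multipliers $\Matrix{\mathcal{V}}_1, \Matrix{\mathcal{V}}_2$ and the homogenizer $\nu_y$ entering $\Matrix{P}_1$ of \Cref{eq:dual_constraints} so that $\Matrix{Z}(\Vector{\nu}^\star)\tilde{\Vector{r}}^\star = \Vector{0}$ and $\nu_y^\star = 0$. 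These stationarity equations are linear in $\Vector{\nu}$ and are solvable precisely because $\tilde{\Vector{r}}^\star$ already annihilates $\tilde{\Matrix{Q}}$ and because the gradients of the orthogonality constraints have a predictable structure at an orthogonal $\Matrix{R}^\star$.

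To handle the unobservable scale I would use the reduction flagged in the text: in a noise-free instance $\alpha^\star$ is a fixed nonzero constant, so rescaling $\Vector{t}_{b_t} \mapsto \alpha^\star\Vector{t}_{b_t}$ converts \Cref{prob:qcqp_orthogonal} into an instance of the known-scale \Cref{prob:qcqp_known_scale}, whose certificate can then be lifted back (absorbing the extra scale coordinate through the Schur complement). The hard part, and the crux of the proof, is the rank verification: showing that the assembled $\Matrix{Z}(\Vector{\nu}^\star)$ is not merely positive semidefinite but has corank exactly one, so that its nullspace is $\Span\{\tilde{\Vector{r}}^\star\}$ and $\Matrix{R}^\star$ is uniquely extractable. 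I expect to prove this via a second-order argument---verifying that the Hessian of the Lagrangian, restricted to the tangent space of $\LieGroupO{3}$ at $\Matrix{R}^\star$, is strictly positive definite---together with a constraint-qualification/strict-complementarity check that upgrades the zero duality gap into the full strong-duality claim of \Cref{prop:zero_duality_gap_known_scale}.
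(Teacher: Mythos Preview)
Your opening move---the trivial dual point $\Vector{\nu}=\Vector{0}$, giving $\Matrix{Z}(\Vector{0})=\tilde{\Matrix{Q}}\succcurlyeq 0$ and objective $0=p^\star$---is precisely the paper's proof, just phrased differently. The paper packages it via Lemma~2 of \cite{cifuentes_local_2018}: take the ground truth as the primal point (feasible since $\Matrix{R}^\star\in\LieGroupSO{3}\subset\LieGroupO{3}$), take $\Vector{\lambda}^\star=\Vector{0}$ as the dual point (dual-feasible because the Lagrangian Hessian then collapses to $\nabla^2 f$, which is PSD as a sum of squared norms), and observe complementarity because the ground truth is an \emph{unconstrained} minimizer of $f$, so $\nabla f(\Vector{x}^\star)=\Vector{0}$. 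Three lines, and the proposition is done.

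Everything after your first paragraph---constructing a nontrivial $\Vector{\nu}^\star$, reducing the unknown-scale case to the known-scale one, and verifying corank-one so that $\ker\Matrix{Z}(\Vector{\nu}^\star)=\Span\{\tilde{\Vector{r}}^\star\}$---is extra work the paper does \emph{not} attempt here. The paper defers the corank-one argument to \Cref{thm:stability_unknown_scale}, where the observability hypotheses of \Cref{lem:strict_convexity} (rotation about two distinct axes plus the span condition of \Cref{eq:span_condition}) are explicitly in force. Without those hypotheses, your corank-one step cannot succeed in general: a noise-free instance with, say, all rotations about a single axis still satisfies the statement of \Cref{prop:zero_duality_gap_known_scale} and still has zero duality gap, but $\tilde{\Matrix{Q}}$ then has a nullspace strictly larger than $\Span\{\tilde{\Vector{r}}^\star\}$, and no choice of multipliers for the orthogonality constraints alone will shrink $\ker\Matrix{Z}$ down to a line. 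So your program for the ``second requirement'' would require importing assumptions the proposition does not carry; the paper sidesteps this by proving only the zero-gap half here and postponing extractability/rank to the stability theorems.
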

\begin{proof}
	We will use Lemma 2 from \cite{cifuentes_local_2018}, which requires that we present a primal solution $\Vector{x}^\star \in \Real^n$ and a dual solution $\Matrix{\lambda}^\star \in \Real^m$ that satisfy  
	\begin{enumerate}
		\item primal feasibility ($h(\Vector{x}^\star = \Vector{0}$)),
		\item dual feasibility ($\LagrangianHessian(\Vector{\lambda^\star}) \succeq 0$),
		\item and complementarity ($\Vector{\lambda}^{\star\T} \nabla h(\Vector{x}^\star) = -\nabla f(\Vector{x}^\star)$),
	\end{enumerate}
	where $\LagrangianHessian$ is the Hessian of the Lagrangian of \Cref{prob:qcqp_orthogonal}.
	Selecting the ground truth rotation $\Vector{R}^\star$ and translation $\Vector{t}^\star$ clearly satisfy primal feasibility since $\Vector{R}^\star$ is a rotation matrix. If we select $\Matrix{\lambda}^\star = \Vector{0}$, we get $\LagrangianHessian(\Vector{\lambda}^\star) = \nabla^2 f$ which is clearly positive semidefinite because $f$ is a positive sum of convex norms and therefore convex. Finally, since the ground truth values of $\Vector{R}$ and $\Vector{t}$ give a cost of zero and $f \geq 0$, they are an unconstrained minimizer (i.e., $\nabla f(\Vector{x}^\star) = \Vector{0})$. Therefore, 
	\begin{equation}
		\Matrix{\lambda}^\T \nabla h(\Vector{x}^\star) = 0 = -\nabla f(\Vector{x}^\star).
	\end{equation}
\end{proof}
\noindent This result trivially extends to \Cref{prob:qcqp_known_scale} and \Cref{prob:qcqp_formulation} with any number of redundant constraints: 
\begin{corollary}[Known Scale and Redundant Constraints] \label{cor:known_scale}
	The known scale case (\Cref{prob:qcqp_known_scale}), and any formulation with redundant constraints, also exhibit strong duality for all noise-free instances. 
\end{corollary}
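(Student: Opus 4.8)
The plan is to observe that the proof of \Cref{prop:zero_duality_gap_known_scale} never used the detailed form of the cost beyond two structural facts: that the objective $f$ is a nonnegative sum of convex squared norms (so that $f$ is convex and $\LagrangianHessian(\Vector{0}) = \nabla^2 f \succeq 0$), and that in a noise-free instance the ground truth attains $f = 0$ and is therefore an unconstrained minimizer with $\nabla f(\Vector{x}^\star) = \Vector{0}$. Both facts survive each modification named in the corollary, so I would simply re-run the three-condition argument of \cite{cifuentes_local_2018} with the trivial dual point $\Matrix{\lambda}^\star = \Vector{0}$.

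For the known-scale case (\Cref{prob:qcqp_known_scale}), first I would note that replacing $J_{\Matrix{t}}$ by $J'_{\Matrix{t}}$ amounts to fixing $\alpha = 1$, which only removes a degree of freedom and leaves the objective a sum of convex squared norms. In the noise-free setting the $b$-sensor egomotion is generated with unit scale, so the ground truth $(\Matrix{R}^\star, \Vector{t}^\star)$ drives every error matrix $\Matrix{E}_t$ in \Cref{eqn:error_eqn} to zero and hence gives cost zero. It is therefore again an unconstrained minimizer, and the choice $\Matrix{\lambda}^\star = \Vector{0}$ yields dual feasibility ($\LagrangianHessian(\Vector{0}) = \nabla^2 f \succeq 0$) and complementarity ($\Matrix{\lambda}^{\star\T}\nabla h(\Vector{x}^\star) = \Vector{0} = -\nabla f(\Vector{x}^\star)$) verbatim.

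For an arbitrary formulation with redundant constraints, I would append the extra equality constraints — which, being redundant, are implied by the $\LieGroupO{3}$ (or $\LieGroupSO{3}$) constraints and thus satisfied by the ground truth — and extend the multiplier vector with zeros on the new entries. Primal feasibility is unchanged; dual feasibility is unchanged because setting all multipliers to zero again leaves $\LagrangianHessian = \nabla^2 f \succeq 0$; and complementarity still reduces to $\Vector{0} = -\nabla f(\Vector{x}^\star)$, which holds since the gradient vanishes at the zero-cost minimizer. Because the two modifications are independent of one another, combining them covers \Cref{prob:qcqp_formulation} with or without redundant orthogonality constraints.

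I expect no substantive obstacle: the entire argument hinges on the fact that a zero-cost minimizer in the noise-free regime makes $\Matrix{\lambda}^\star = \Vector{0}$ admissible no matter how the feasible set is described, so enlarging the constraint set (more multipliers) or shrinking the variable set (dropping $\alpha$) cannot break any of the three KKT conditions. The only point requiring a line of care is confirming that fixing $\alpha = 1$ genuinely preserves the zero-residual property of the ground truth, which is immediate from \Cref{eqn:error_eqn} since each $\Matrix{E}_t$ vanishes at the true transformation.
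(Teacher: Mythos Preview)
Your proposal is correct and is precisely the argument the paper has in mind: the paper does not give a separate proof for this corollary but simply asserts that \Cref{prop:zero_duality_gap_known_scale} ``trivially extends'' to the known-scale and redundant-constraint cases, and your write-up supplies exactly the routine verification (rerunning the three conditions of Lemma~2 in \cite{cifuentes_local_2018} with $\Matrix{\lambda}^\star = \Vector{0}$) that makes this claim explicit.
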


Next, we present \Cref{lem:strict_convexity}, which will be crucial for proving our main results in Theorems \ref{thm:stability_known_scale} and \ref{thm:stability_unknown_scale}. One of the lemma's conditions, which essentially requires the sensor platform to rotate about two distinct axes in a fixed global reference frame, is a common observability criterion found in similar formulations of extrinsic calibration \cite{brookshire2013extrinsic, Kelly_2011}. 

\begin{lemma}[Strict Convexity] \label{lem:strict_convexity}
A noise-free instance of our extrinsic calibration from egomotion \Cref{prob:qcqp_known_scale} has a strictly convex cost $f = J'_{\Matrix{t}} + J_{\Matrix{R}}$ if the measurement data is generated by motion that includes rotations $\Vector{R}_{b_i} \neq \Vector{I}, \Vector{R}_{b_j} \neq \Vector{I}$ of the sensor platform about two unique axes with corresponding translations $\Vector{t}_{a_i}, \Vector{t}_{a_j}$ satisfying 

\begin{equation} \label{eq:span_condition}
	[(\Vector{R}\Vector{t}_{a_i})^\T \ (\Vector{R}\Vector{t}_{a_j})^\T]^\T \notin \Span(\Col(\begin{bmatrix}
		\Vector{I} - \Vector{R}_{b_i} \\
		\Vector{I} - \Vector{R}_{b_j}
		\end{bmatrix})),
\end{equation}
where $\Matrix{R}$ is the ground truth extrinsic rotation. 
\end{lemma}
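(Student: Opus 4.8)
The plan is to prove strict convexity by showing that the constant Hessian of the quadratic cost $f = J'_{\Matrix{t}} + J_{\Matrix{R}}$ is positive definite. Writing $\Vector{x}^\T = [\,\Vector{t}^\T \ \ \Vector{r}^\T\,]$ as in the known-scale reduction, $f$ is a sum of squared affine residuals, so its Hessian is $2\,\Matrix{A}^\T\Matrix{A}$ for the stacked residual Jacobian $\Matrix{A}$, and strict convexity is equivalent to $\Matrix{A}$ having trivial nullspace. Letting $\Matrix{S}$ denote the $3\times 3$ matrix with $\Vectorize{\Matrix{S}}$ equal to the $\Vector{r}$-block of a candidate nullspace vector, the Jacobian rows coming from $J_{\Matrix{R}}$ and $J'_{\Matrix{t}}$ vanish precisely when
\begin{equation}
\Matrix{S}\Matrix{R}_{a_t} = \Matrix{R}_{b_t}\Matrix{S} \quad\text{and}\quad (\Identity - \Matrix{R}_{b_t})\Vector{t} + \Matrix{S}\Vector{t}_{a_t} = \Vector{0}, \quad \forall\, t,
\end{equation}
so the entire task reduces to proving that these relations force $\Matrix{S} = \Zero$ and $\Vector{t} = \Vector{0}$.

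First I would exploit the noise-free ground-truth identity $\Matrix{R}\Matrix{R}_{a_t} = \Matrix{R}_{b_t}\Matrix{R}$, equivalently $\Matrix{R}_{b_t} = \Matrix{R}\Matrix{R}_{a_t}\Matrix{R}^\T$ since $\Matrix{R} \in \LieGroupO{3}$. Substituting this into the first relation and defining $\Matrix{W} \triangleq \Matrix{R}^\T\Matrix{S}$ turns it into the commutation condition $\Matrix{W}\Matrix{R}_{a_t} = \Matrix{R}_{a_t}\Matrix{W}$ for all $t$. The crux is then the linear-algebraic claim that any real $3\times 3$ matrix commuting with two rotations about distinct axes (each with nonzero angle) must be a scalar multiple of $\Identity$. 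I would establish this by eigenspace bookkeeping: commuting with a nonidentity rotation forces $\Matrix{W}$ to preserve the one-dimensional axis eigenspace and the two-dimensional perpendicular plane, on which it acts as a complex scalar; imposing this simultaneously for two unique axes collapses $\Matrix{W}$ to $a\Identity$, and hence $\Matrix{S} = a\Matrix{R}$ for some scalar $a$.

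Substituting $\Matrix{S} = a\Matrix{R}$ into the translation relation and stacking the $t = i$ and $t = j$ equations yields
\begin{equation}
\begin{bmatrix} \Identity - \Matrix{R}_{b_i} \\ \Identity - \Matrix{R}_{b_j} \end{bmatrix}\Vector{t} = -a\begin{bmatrix} \Matrix{R}\Vector{t}_{a_i} \\ \Matrix{R}\Vector{t}_{a_j} \end{bmatrix}.
\end{equation}
If $a \neq 0$, the right-hand side is a nonzero multiple of the stacked vector in \eqref{eq:span_condition} and therefore lies in $\Col$ of the matrix on the left, directly contradicting the span hypothesis; hence $a = 0$ and $\Matrix{S} = \Zero$. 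The translation relation then reduces to $(\Identity - \Matrix{R}_{b_i})\Vector{t} = (\Identity - \Matrix{R}_{b_j})\Vector{t} = \Vector{0}$, so $\Vector{t}$ lies along both rotation axes; since the two axes are distinct, the only common fixed direction is trivial, giving $\Vector{t} = \Vector{0}$ and completing the argument that $\Matrix{A}$ has trivial nullspace.

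The main obstacle is the commutant characterization in the second step. Degenerate angles, most notably $\pi$ (whose rotation has a two-dimensional $-1$-eigenspace), enlarge the set of commuting matrices beyond $\mathbb{R}\,\Identity$, so I would either argue that the "two unique axes" hypothesis still pins $\Matrix{S}$ down sufficiently for the span condition in \eqref{eq:span_condition} to apply, or restrict attention to the generic angle configurations that this observability assumption is intended to capture. Verifying that this edge case does not break the nullspace argument is where the genuine care is required; the remaining manipulations are routine.
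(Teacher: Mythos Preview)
Your approach is essentially the same as the paper's: both establish that the quadratic cost has a positive definite Hessian by showing the equivalent statement that the zero-residual system admits a unique solution (equivalently, the stacked Jacobian has trivial nullspace), first pinning the rotational block down to a one-parameter family and then invoking the span condition \eqref{eq:span_condition} together with the two-axis assumption to finish the translation part. The only presentational differences are that the paper phrases the argument via uniqueness of the minimizer rather than $\ker\Matrix{A}=\{\Vector{0}\}$, and it dispatches the rotational step by citing Lemma~1 of Andreff et al.\ instead of your self-contained commutant argument; the $\pi$-angle degeneracy you flag is a genuine edge case that the paper's proof does not address either.
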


\begin{proof}
	Our proof relies on the fact that a positive definite Hessian matrix $\nabla^2 f$ is a necessary and sufficient condition for strict convexity of quadratic functions \cite{boyd2004convex}. We first note that $f$ is a positively weighted sum of quadratic norms, meaning $f = \Vector{x}^\T\Vector{Q}\Vector{x} + \Vector{b}^\T\Vector{x} + c \geq 0$ and $\nabla^2 f = \Matrix{Q} \succeq 0$. We then note that the first-order optimality condition gives us 
	\begin{equation}
		\nabla f|_{\Vector{x}^\star} = 0 \implies \Vector{Q}\Vector{x}^\star = -\Vector{b}.
	\end{equation}
	This tells us that if the solution $\Vector{x}^\star$ is unique, then $\Vector{Q} \succ 0$, otherwise the nullspace of $\Vector{Q}$ provides infinite solutions. Therefore, it suffices to show that the global (unconstrained) minimizer $\Vector{x}^\star$ is unique in order to prove that $f$ is strictly convex. Since $f \geq 0$ is a sum of squared residuals that are all equal to zero when $\Vector{x}^\star = [\Vector{t}^{\star\T} \ \Vector{r}^{\star\T}]^\T$, where $\Vector{t}^\star$ and $\Vector{r}^\star$ are the true calibration parameters used to generate the noise-free measurements, we see that    
		$f([\Vector{t}^{\star\T} \ \Vector{r}^{\star\T}]^\T) = 0$ is a global minimum of $f$. Thus, we must show that $f(\Vector{x}) = 0$ implies that $\Vector{x} = \Vector{x}^\star$. 
		
		We will first demonstrate that $J_{\Matrix{R}}(\Vector{r}) = 0$ if and only if $\Vector{r} = \gamma \Vector{r}^\star, \ \gamma \in \Real$. To accomplish this, we will note that each squared residual term $\Vector{R}\Vector{R}_{a_t}-\Vector{R}_{b_t}\Vector{R}$ is equal to zero if and only if
		\begin{equation}
			\Vector{R} = \Vector{R}_{b_t}\Vector{R}\Vector{R}_{a_t}^\T. 
		\end{equation} 
		Using the Kronecker product's vectorization identity yields
		\begin{equation} \label{eq:rotation_cost_kronecker}
			(\Identity_9 - \Vector{R}_{a_t} \otimes \Vector{R}_{b_t})\Vector{r} = \Vector{0}_9, \ t \in \{i, j\}.
		\end{equation}
		Since $\Matrix{R}_{b_i}$ and $\Matrix{R}_{b_j}$ are rotations about distinct axes, Lemma 1 from \cite{andreff_robot_2001} ensures that the system in \Cref{eq:rotation_cost_kronecker} has a unique solution up to scale $\gamma$.
		
		Having established that $f(\Vector{x}) = 0 \implies \Vector{r} = \gamma\Vector{r}^\star$, we will now investigate the squared residual terms of $J_{\Matrix{t}}$:
		\begin{equation}
			\Vector{R}\Vector{t}_{a_t} + \Vector{t} - \Vector{R}_{b_i}\Vector{t} - \Vector{t}_{b_t}.
		\end{equation}
		Substituting in $\gamma\Vector{r}^\star$ and setting the $i$th residual to zero gives
		\begin{equation}
			(\Vector{I} - \Vector{R}_{b_i})\Vector{t} + \gamma \Vector{R}^\star \Vector{t}_{a_i} = \Vector{t}_{b_i},
		\end{equation}
		which we can rearrange and combine with the $j$th residual to get 
		\begin{equation}
			\begin{bmatrix}
				\Vector{I} - \Vector{R}_{b_i} & \Vector{R}^\star \Vector{t}_{a_i} \\
				\Vector{I} - \Vector{R}_{b_j} & \Vector{R}^\star \Vector{t}_{a_j}
			\end{bmatrix} 
			\begin{bmatrix}
				\Vector{t} \\ 
				\gamma 
			\end{bmatrix}
			= \Matrix{M} \begin{bmatrix}
				\Vector{t} \\ 
				\gamma 
			\end{bmatrix} = \begin{bmatrix}
				\Vector{t}_{b_i} \\
				\Vector{t}_{b_j}
			\end{bmatrix}. 
		\end{equation}
		In order to prove that $\Vector{t} = \Vector{t}^\star, \gamma = 1$ is a unique solution, we must demonstrate that $\Matrix{M} \in \Real^{6\times 4}$ is full rank. First, note that 
		\begin{equation}
			\Rank(\begin{bmatrix}
				\Vector{I} - \Vector{R}_{b_i} \\
				\Vector{I} - \Vector{R}_{b_j} 
			\end{bmatrix})
			= 3.
		\end{equation}
		Suppose $\exists\,\Vector{w} \neq \Vector{0}$ such that $(\Vector{I} - \Vector{R}_{b_i})\,\Vector{w} = 0$ and $(\Vector{I} - \Vector{R}_{b_j})\Vector{w} = 0$. This would mean that $\Vector{w}$ is along the axis of rotation for both $\Vector{R}_{b_i}$ and $\Vector{R}_{b_j}$, which contradicts our assumption. Since the left three columns of $\Vector{M}$ are rank 3, $\Vector{M}$ is rank 4 when the fourth column $[(\Vector{R}^\star \Vector{t}_{a_t})^\T (\Vector{R}^\star \Vector{t}_{a_j})^\T]^\T$ is not in the span of of the first 3 columns. This is precisely the condition 
		\begin{equation}
		[(\Vector{R}\Vector{t}_{a_t})^\T \ (\Vector{R}\Vector{t}_{a_j})^\T]^\T \notin \Span(\Col(\begin{bmatrix}
				\Vector{I} - \Vector{R}_{b_i} \\
				\Vector{I} - \Vector{R}_{b_j}
			\end{bmatrix})),
		\end{equation}
		which is one of our assumptions. Therefore, $\Vector{x}^\star$ is a unique minimizer of $f$ and $f$ is strictly convex.
\end{proof}

We are now prepared to prove our two main results. These theorems demonstrate that the strong duality of our formulation of extrinsic calibration is inherently stable to the addition of measurement noise. Both theorems rely on technical results described in detail in \cite{cifuentes_local_2018}. 
\begin{theorem}[Stability of \Cref{prob:qcqp_known_scale}] \label{thm:stability_known_scale}
	 Let $\Vector{\theta}$ be a vector containing all the egomotion measurements that parameterize the cost function of \Cref{prob:qcqp_orthogonal}. Let $\bar{\Vector{\theta}}$ be any parameterization such that the conditions of \Cref{lem:strict_convexity} hold (i.e., they describe a noise-free problem instance with rotation about two unique axes and Condition \ref{eq:span_condition} holds). There exists some $\epsilon > 0$ such that if $\Norm{\Vector{\theta} - \bar{\Vector{\theta}}} \leq \epsilon$, then strong duality holds for the instance of \Cref{prob:qcqp_orthogonal} described by $\Vector{\theta}$, and the global optimum can be obtained via the solution of the dual problem. 
\end{theorem}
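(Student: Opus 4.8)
The plan is to treat $\bar{\Vector{\theta}}$ as a nominal (noise-free) parameter value and $\Vector{\theta}$ as a nearby perturbation, and to invoke the local stability theorem for SDP relaxations of \cite{cifuentes_local_2018}. That result guarantees that exactness of the relaxation (zero duality gap with recoverable primal solution) is preserved under sufficiently small perturbations of the problem data, provided a primal-dual pair at the nominal point satisfies a constraint qualification, a positive-semidefinite dual certificate (dual feasibility), and a nondegeneracy condition. Once these hypotheses are checked at $\bar{\Vector{\theta}}$, the existence of $\epsilon$ and the recovery of the optimum from the dual are immediate consequences of the cited theorem, so essentially all of the work lies in verifying the hypotheses.

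First I would fix the nominal primal-dual pair exactly as in \Cref{prop:zero_duality_gap_known_scale}: take the primal point to be the ground-truth $\Vector{x}^\star = [\Vector{t}^{\star\T}\ \Vector{r}^{\star\T}]^\T$ that generated the noise-free data, which is feasible for \Cref{prob:qcqp_known_scale} because $\Matrix{R}^\star \in \LieGroupSO3 \subset \LieGroupO3$, and take the multipliers to be $\Vector{\lambda}^\star = \Vector{0}$. \Cref{prop:zero_duality_gap_known_scale} and \Cref{cor:known_scale} already show this pair certifies zero duality gap at $\bar{\Vector{\theta}}$: $\Vector{x}^\star$ is an unconstrained minimizer with $f(\Vector{x}^\star) = 0$ and $\nabla f(\Vector{x}^\star) = \Vector{0}$, so stationarity and complementarity hold and the Lagrangian Hessian reduces to the cost Hessian $\nabla^2 f$, supplying the PSD certificate. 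The constraint qualification is also straightforward, since $\LieGroupO3 = \{\Matrix{R} : \Matrix{R}\Matrix{R}^\T = \Identity\}$ is a smooth manifold whose orthogonality constraints have linearly independent gradients at $\Matrix{R}^\star$ (LICQ); moreover these constraints are independent of $\Vector{\theta}$, so only the cost is perturbed.

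The crux, and the step I expect to be the main obstacle, is the nondegeneracy condition that turns pointwise exactness into exactness throughout a neighborhood; this is exactly where \Cref{lem:strict_convexity} is indispensable. Under its hypotheses --- two distinct rotation axes and the span condition \eqref{eq:span_condition} --- the cost is strictly convex, so $\nabla^2 f$ at the nominal solution is positive \emph{definite} rather than merely positive semidefinite. This strictness is what makes the relaxation robust rather than knife-edge: for a noisy $\Vector{\theta}$ the unconstrained minimizer generally leaves $\LieGroupO3$, so the perturbed multiplier is no longer zero, yet because $\nabla^2 f$ is definite (an open condition) and LICQ holds, the bordered KKT system stays nonsingular and its solution varies continuously with $\Vector{\theta}$, keeping the perturbed Lagrangian Hessian definite and the dual certificate PSD. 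The care needed here is to match our concrete second-order behaviour to the precise nondegeneracy hypothesis of \cite{cifuentes_local_2018}; granting that match, the cited theorem yields an $\epsilon > 0$ such that strong duality holds for every $\Vector{\theta}$ with $\Norm{\Vector{\theta} - \bar{\Vector{\theta}}} \le \epsilon$, and the global optimum of \Cref{prob:qcqp_known_scale} is extracted from the dual SDP solution, as claimed.
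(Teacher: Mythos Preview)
Your proposal is correct and follows essentially the same approach as the paper: invoke the stability theorem of \cite{cifuentes_local_2018} at the nominal noise-free parameter $\bar{\Vector{\theta}}$ and verify its hypotheses using \Cref{prop:zero_duality_gap_known_scale}/\Cref{cor:known_scale} for the primal-dual certificate and \Cref{lem:strict_convexity} for the crucial strict convexity (positive-definite Hessian) condition. The only cosmetic difference is that you verify the constraint qualification via LICQ on $\LieGroupO{3}$, whereas the paper uses ACQ via the radical-ideal characterization from \cite{cifuentes_local_2018}; since LICQ implies ACQ this is immaterial.
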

\begin{proof}
	We will use Theorem 8 from \cite{cifuentes_local_2018}, which requires that:
	\begin{enumerate}
		\item the cost function $f$ varies continuously as a function of $\Vector{\theta}$,
		\item $\bar{\Vector{\theta}}$ is such that $f_{\bar{\Vector{\theta}}}$ is strictly convex (where $f_{\Vector{\theta}}$ simply denotes the specific cost function formed with measurements in $\Vector{\theta}$), 
		\item \Cref{prob:qcqp_orthogonal}'s minimizer $\Vector{x}^\star$ is also the minimizer of the unconstrained cost function $f_{\bar{\Vector{\theta}}}$ (i.e., $\nabla f_{\bar{\Vector{\theta}}}(\Vector{x}^\star) = \Matrix{0}$), and
		\item the Abadie constraint qualification (ACQ) holds for the algebraic variety described by the constraints of \Cref{prob:qcqp_orthogonal}.  
	\end{enumerate}
	 The cost function $f_{\Vector{\theta}}$ depends quadratically (and therefore continuously) on $\Vector{\theta}$, satisfying condition 1). \Cref{lem:strict_convexity} ensures that 2) and 3) hold. Finally, condition 4) holds because the variety described by $\LieGroupO{3}$ defines a radical ideal (see Lemma 21 and Examples 7.3-4 in \cite{cifuentes_local_2018} for details).
\end{proof}

\begin{theorem}[Stability of \Cref{prob:qcqp_orthogonal}] \label{thm:stability_unknown_scale}
	The stability property described in \Cref{thm:stability_known_scale} also holds for the unknown scale case of \Cref{prob:qcqp_orthogonal}.
\end{theorem}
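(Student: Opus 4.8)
The plan is to mirror the proof of \Cref{thm:stability_known_scale} as closely as possible, reusing the local-stability machinery of \cite{cifuentes_local_2018}, and to isolate the single place where the extra scale variable $\alpha$ genuinely changes the argument. Writing $\Vector{x} = [\Vector{t}^\T\ \alpha\ \Vector{r}^\T]^\T$, the feasible variety of \Cref{prob:qcqp_orthogonal} is still exactly $\LieGroupO{3}$ (with $\Vector{t}$ and $\alpha$ unconstrained), and the cost is still quadratic, hence continuous, in the measurement vector $\Vector{\theta}$. Consequently conditions 1 (continuity) and 4 (ACQ, via the radical ideal of the $\LieGroupO{3}$ variety, which is unaffected by adjoining the free variables $\Vector{t},\alpha$) transfer verbatim from the proof of \Cref{thm:stability_known_scale}; only the convexity/uniqueness conditions require new work.

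The core step is to re-establish an analogue of \Cref{lem:strict_convexity} for the monocular cost $f = J_{\Matrix{t}} + J_{\Matrix{R}}$. I would repeat the argument of \Cref{lem:strict_convexity} up to the point where $J_{\Matrix{R}} = 0$ forces $\Vector{r} = \gamma\Vector{r}^\star$, and then append the scale to the linear system: setting each translational residual $\Matrix{R}\Vector{t}_{a_t} + (\Identity - \Matrix{R}_{b_t})\Vector{t} - \alpha\Vector{t}_{b_t}$ to zero and stacking the two observed motions yields the homogeneous system
\begin{equation*}
\begin{bmatrix}
\Identity - \Matrix{R}_{b_i} & \Matrix{R}^\star\Vector{t}_{a_i} & -\Vector{t}_{b_i} \\
\Identity - \Matrix{R}_{b_j} & \Matrix{R}^\star\Vector{t}_{a_j} & -\Vector{t}_{b_j}
\end{bmatrix}
\begin{bmatrix} \Vector{t} \\ \gamma \\ \alpha \end{bmatrix} = \Vector{0}.
\end{equation*}
Under the observability conditions of \Cref{lem:strict_convexity} together with a mild additional non-degeneracy condition on the $\Vector{t}_{b_t}$ (equivalently, invertibility of the translation–scale block $\Matrix{Q}_{\Vector{t}\alpha}$ already assumed in the Schur reduction of \Cref{eq:recover_trans}), I would show that the kernel of this $6\times 5$ matrix is exactly the one-dimensional gauge ray spanned by the ground-truth triple $(\Vector{t}^\star, 1, \alpha^\star)$.

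The main obstacle is that, unlike the known-scale case, promoting $\alpha$ to a free variable makes the cost purely homogeneous of degree two in $\Vector{x}$: since $f(\Vector{x}^\star) = 0$ with $\Vector{x}^\star \neq \Vector{0}$, the Hessian $\Matrix{Q} = \nabla^2 f$ is singular, so $f$ is convex but \emph{not} strictly convex, and the shortcut used in \Cref{thm:stability_known_scale} (global strict convexity plus coincidence of the constrained and unconstrained minimizers) is unavailable. I would resolve this by invoking the more general local-stability result of \cite{cifuentes_local_2018}, which asks only for the second-order sufficient condition rather than global strict convexity: it suffices that the certificate matrix $\Matrix{Z}(\Vector{\nu}^\star)$ be PSD with null space exactly $\Span(\tilde{\Vector{r}}^\star)$ at the noise-free minimizer. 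The kernel computation above shows that the lone null direction of $\Matrix{Q}$ is the gauge/scaling direction, which is \emph{normal} to the $\LieGroupO{3}$ constraint (scaling $\Matrix{R}^\star$ violates $\Matrix{R}\Matrix{R}^\T = \Identity$ and is pinned down by the homogenizing constraint $y^2 = 1$); hence the restricted Hessian is positive definite on the constraint tangent space and the minimizer is isolated.

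Finally, strong duality for the noise-free instance (\Cref{prop:zero_duality_gap_known_scale} and \Cref{cor:known_scale}, applied with the redundant scale variable) furnishes the exact dual certificate $\Vector{\nu}^\star$ whose $\Matrix{Z}(\Vector{\nu}^\star)$ has this minimal one-dimensional kernel. A standard eigenvalue-continuity argument — the content of the stability theorem of \cite{cifuentes_local_2018} — then guarantees some $\epsilon > 0$ such that for all $\Vector{\theta}$ with $\Norm{\Vector{\theta} - \bar{\Vector{\theta}}} \leq \epsilon$ the perturbed certificate remains PSD with a one-dimensional kernel, so the relaxation stays tight and $\Matrix{R}^\star, \Vector{t}^\star, \alpha^\star$ are recoverable from the dual solution, establishing the claim.
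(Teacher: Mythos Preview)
Your proposal is correct and takes essentially the same route as the paper: both recognize that homogeneity of the unknown-scale cost destroys strict convexity, both fall back on the more general corank-one stability criterion of \cite{cifuentes_local_2018} (the paper invokes Remark~7 and Theorem~14 explicitly), and both then verify that the Lagrangian Hessian at the noise-free minimizer (which equals the cost Hessian since $\Vector{\lambda}^\star=\Vector{0}$) has a one-dimensional kernel. The one difference worth noting is in how corank-one is established: instead of your extended $6\times 5$ kernel computation with its hedged ``mild additional non-degeneracy condition on the $\Vector{t}_{b_t}$,'' the paper simply observes that the unknown-scale Hessian $\Matrix{G}$ contains the known-scale Hessian $\Matrix{H}$ as a principal submatrix, so $\Rank(\Matrix{G})\geq\Rank(\Matrix{H})=N-1$ directly from \Cref{lem:strict_convexity}, while homogeneity together with a nonzero zero-cost minimizer forces $\Rank(\Matrix{G})\leq N-1$. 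This bordered-matrix argument is shorter and shows that no hypothesis beyond those of \Cref{lem:strict_convexity} is needed.
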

\begin{proof}
	Since the cost function of \Cref{prob:qcqp_orthogonal} is homogeneous, it is not strictly convex for any $\Vector{\theta}$ and we cannot use the same approach as in \Cref{thm:stability_known_scale}. We will instead use Remark 7 and the more general Theorem 14 from \cite{cifuentes_local_2018}, which require that:
	\begin{enumerate}
		\item ACQ holds at $\Vector{x}^\star$ of the problem instance described by $\Vector{\theta}$,
		\item the constraints of \Cref{prob:qcqp_orthogonal} describe a smooth manifold, and
		\item the Hessian of the Lagrangian of the cost function $f_{\bar{\Vector{\theta}}}$ is corank-one.
	\end{enumerate}
	We saw in \Cref{thm:stability_known_scale} that 1) holds, and $\LieGroupSO{3}$ is a smooth manifold that satisfies 2). Therefore, it remains to prove 3).
	In \Cref{prop:zero_duality_gap_known_scale} and \Cref{cor:known_scale}, we established that exact measurements lead to a zero-duality gap problem instance with corresponding Lagrange multiplier $\Vector{\lambda} = \Vector{0}$ for both \Cref{prob:qcqp_known_scale} and \Cref{prob:qcqp_orthogonal}. \Cref{lem:strict_convexity} established that the affine (i.e., non-homogenized) cost function of the known scale problem is strictly convex. In other words, its Hessian $\Matrix{H} = \nabla^2 f$ is positive definite. We can use the fact that its Lagrange multiplier $\Vector{\lambda}$ is zero to conclude that its Lagrangian's Hessian $\LagrangianHessian_{\bar{\Vector{\theta}}}(\Vector{\lambda}) = \Matrix{H}$ is also positive definite. Letting $\Vector{y}$ denote the affine coordinates of \Cref{prob:qcqp_known_scale}, we note that the homogeneous \Cref{prob:qcqp_orthogonal}'s coordinates are of the form
	\begin{equation}
		\Vector{x} = \bbm\Vector{y}^\T \!\!& \alpha\ebm^\T,
	\end{equation}
	where $\alpha$ is the scale variable. The Hessian of \Cref{prob:qcqp_orthogonal}'s cost function can therefore be written
	\begin{equation}
		\Matrix{G} = \begin{bmatrix}
			\Matrix{H} & \frac{1}{2} \Vector{b} \\
			\frac{1}{2} \Vector{b}^\T & c
		\end{bmatrix} \in \Real^{N\times N},
	\end{equation} 
	for appropriate parameters $\Vector{b}$ and $c$. We need to demonstrate (for Theorem 14 in \cite{cifuentes_local_2018}) that $\Matrix{G}$ has corank-one (i.e., rank $N-1$). Elementary properties of matrix rank tell us that 
	\begin{equation}
		\text{rank}(\Matrix{G}) \geq
		\text{rank}(\begin{bmatrix}
			\Matrix{H} \\
			\frac{1}{2} \Vector{b}^\T
		\end{bmatrix}) \geq \text{rank}(\Matrix{H}) = N-1,
	\end{equation} 
	with the final equality coming from the positive definiteness of $\Matrix{H}$. Since the cost function for the unknown scale problem is homogeneous and has a minimizer $\Vector{x}^\star \neq \Vector{0}$ that evaluates to zero, we know it is not full rank. Therefore, the Hessian $\Matrix{G}$ is corank-one as required.  
\end{proof}

Theorems \ref{thm:stability_known_scale} and \ref{thm:stability_unknown_scale} demonstrate the existence of a measurement error bound $\epsilon$ within which our hand-eye calibration formulations exhibit strong duality, but we leave the exact quantification of this bound as future work.

\section{Experiments and Results} 
\label{sec:experiments}

In order to demonstrate the strong duality guarantees of \Cref{sec:theory}, we focus primarily on synthetic experiments where measurement statistics and the ground truth value of $\Matrix{\Theta}$ are known exactly. 
We also compare our convex relaxation approach against a simple method that does not guarantee a global minimum on the same synthetic data.
Throughout this section, `optimality' refers specifically to global optimality of an extrinsic transformation estimate with respect to the cost function of \Cref{prob:qcqp_formulation}.
In the presence of noise, optimality does not imply zero error with respect to the ground truth $\Matrix{\Theta}$; e.g., the dual solutions in \Cref{fig:Error_plot} are globally optimal with respect to \Cref{prob:qcqp_formulation} but still differ from ground truth. Throughout all experiments, the runtime of our algorithm was on the order of two seconds without tuning optimization parameters, which is similar to the performance of its predecessor as reported in \cite{giamou2019_certifiably}.

\subsection{Optimality Certification Conditions}

Throughout our experiments, three criteria are used to certify that the solution to the dual problem is optimal. First, a singular value decomposition (SVD) is performed to evaluate the numerical rank of the solution matrix $\Matrix{Z}$ in \Cref{prob:dual_relaxation}. Any right-singular vector with corresponding singular value less than $10^{-3}$ is used to form the solution to the primal problem. Next, the extracted rotation solutions are checked via $\norm{\Matrix{R}^T\Matrix{R}-\Identity}{F}<10^{-3}$, which ensures that the solution belongs to $\LieGroupSO{3}$. Finally, solutions with a duality gap greater than $0.01\%$ of the primal cost are rejected.\footnote[3]{The approximations of floating point arithmetic necessitate the use of numerical tolerances for `zero' singular values and duality gaps; $10^{-3}$ and $0.01\%$ performed well experimentally for all problem instances tested.}

\subsection{Synthetic Data}
\label{sec:synthetic_data}

The simulation data were created by generating trajectories on a smooth, undulating surface. The $x$-axis of $\CoordinateFrame{a_t}$ was set to be tangent to the trajectory at every point $\Vector{t}_{w}^{a_t w}$, while the\linebreak $z$-axis was set to be normal to the surface, thus defining $\Matrix{R}_{wa_t}$. The full pose $\Matrix{T}_{wa_t}$ is therefore defined by $\Matrix{R}_{wa_t}$ and the position $\Vector{t}_{w}^{a_t w}$. The absolute position of the second sensor $\Vector{t}_{w}^{b_t w}$ at each time step was determined using the ground-truth value of the extrinsic transformation $\Matrix{\Theta}$ (see \Cref{eq:extrinsic_calibration}). An example ground-truth trajectory produced via this method is shown in Figure \ref{fig:trajectory}.

To create each dataset, egomotion measurements $\Matrix{T}_{a_t}$ and $\Matrix{T}_{b_t}$ were extracted from the trajectories of sensors $a$ and $b$, respectively, where $b$ was the monocular camera. All camera translation vectors were scaled by $\alpha > 0$. Finally, zero-mean Gaussian noise was added to each translation vector and injected into each rotation matrix via a left perturbation of $\LieGroupSO{3}$ \cite{barfoot2017state}. Details about the noise variance are given in \Cref{subsubsec:zdg_rc}.
\begin{figure} 
\centering
\includegraphics[width=0.495\textwidth]{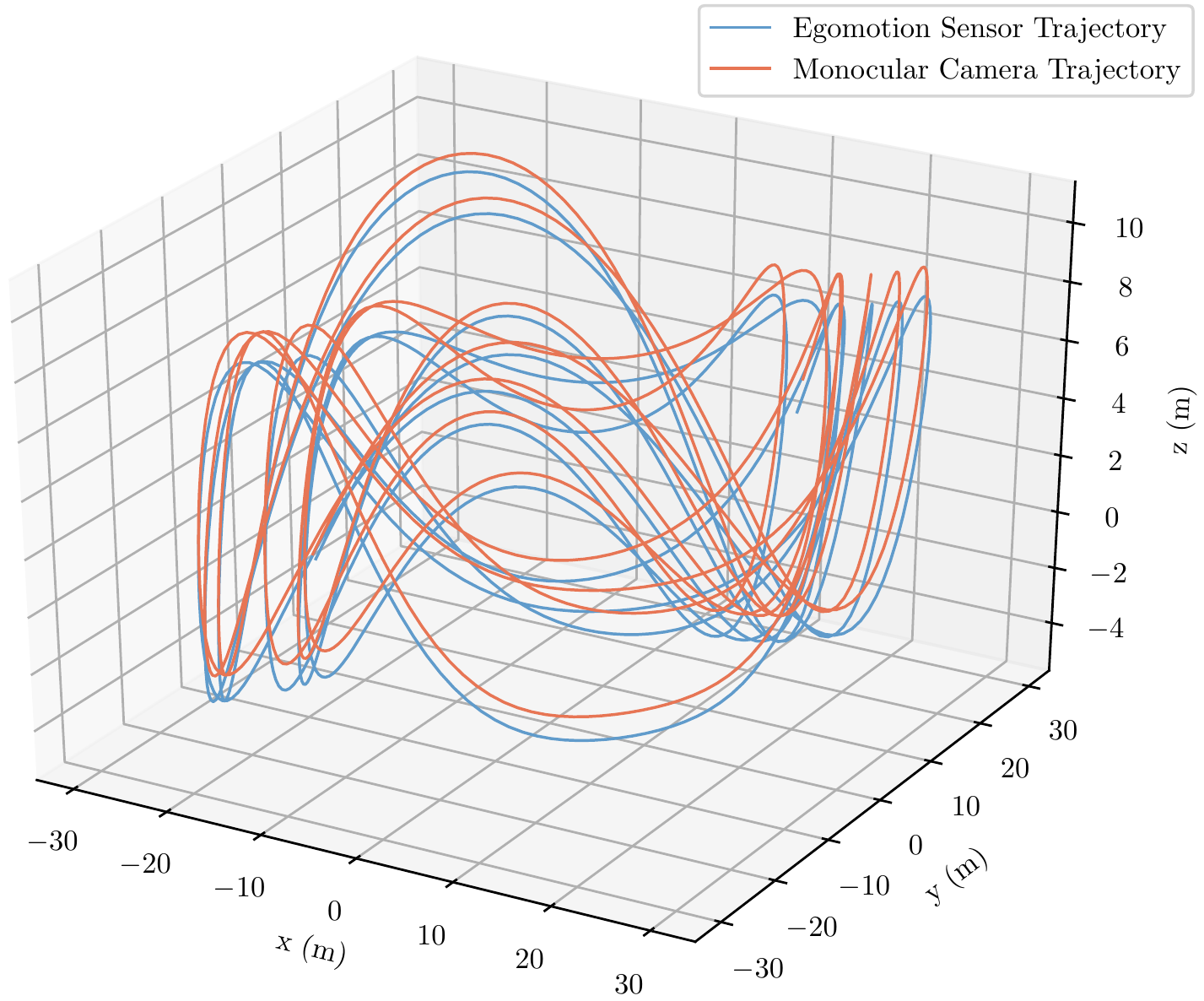}
\caption{Trajectory of the rigid body system described in \Cref{sec:synthetic_data}. Rotations of the rigid body were about all $3$ axes with magnitudes ranging from $0.05$ rad to $0.3$ rad. The $\LieGroupSE{3}$ trajectory of each sensor on the rigid body is corrupted by synthetic noise and the translation estimates of the monocular camera are scaled by $\alpha > 0$. The sensor platform trajectory provides sufficient rotation about different axes to satisfy the assumptions of \Cref{lem:strict_convexity}.}
\label{fig:trajectory}
\vspace{-0.4cm}
\end{figure}

\subsubsection{Zero-Duality Gap and Redundant Constraints}
\label{subsubsec:zdg_rc}

In \Cref{sec:theory}, we proved the stability of our QCQP formulation of hand-eye calibration to noisy measurements. In order to verify this result, the trajectory in \Cref{fig:trajectory} was corrupted with measurement noise and the calibration was determined with our method. Each bar in \Cref{fig:Optimal_plot} represents the percentage of 100 trials for which the solver was able to find (and certify via the duality gap) a globally optimal solution. Each grouping in \Cref{fig:Optimal_plot} corresponds to a different standard deviation of the translational noise. We ran additional tests with rotational standard deviations up to 3 rad. The rotational noise tests revealed that rotational noise, at the magnitudes tested, did not affect solution optimality (although, as noted previously, this does not imply accuracy for very large noise magnitudes). Each bar corresponds to a different combination of $\LieGroupSO{3}$ constraints (\Cref{eq:SO3con}). 

Our first observation is that the default case, labelled `R,' with no redundant or right-handedness constraints, does in fact achieve a global optimum in 100\% of cases, provided that the standard deviation of the translational measurement noise is $1\%$ of the translation magnitude.
This behaviour was predicted by \Cref{thm:stability_unknown_scale}. 
Secondly, we note that including both redundant column and row orthogonality constraints (`R+C') improves the stability of our approach to measurement noise.
Finally, the inclusion of the right-handedness constraints (`R+H' and `R+C+H'), which prohibit orthogonal matrices that include reflections, also increases the robustness of our solver to noise. 
These results mirror those found in \cite{giamou2019_certifiably}, which are (retroactively) predicted by \Cref{thm:stability_known_scale}.

\subsubsection{Calibration Accuracy}

In this section we evaluate the performance of our algorithm against a simple, suboptimal linear approach inspired by \cite{andreff_robot_2001}.
This suboptimal approach uses the same cost function as \Cref{prob:qcqp_reduced}, but solves the unconstrained problem via SVD before projecting onto the nearest orthogonal matrix using the method of \cite{horn1988closed} and then extracting the unconstrained optimal $\Vector{t}$ and $\alpha$ from \Cref{eq:recover_trans}.
Since it solves successive linear least squares systems to approximately minimize the cost function, we refer to this as the linear solution.
Our results are displayed in \Cref{fig:Error_plot}: each row has a different pair of translational and rotational noise settings. 
At low error, both algorithms perform similarly, but as the error increases our solution outperforms the linear solution, highlighting the importance of a globally optimal approach.
\begin{figure} 
\centering
\includegraphics[width=0.485\textwidth]{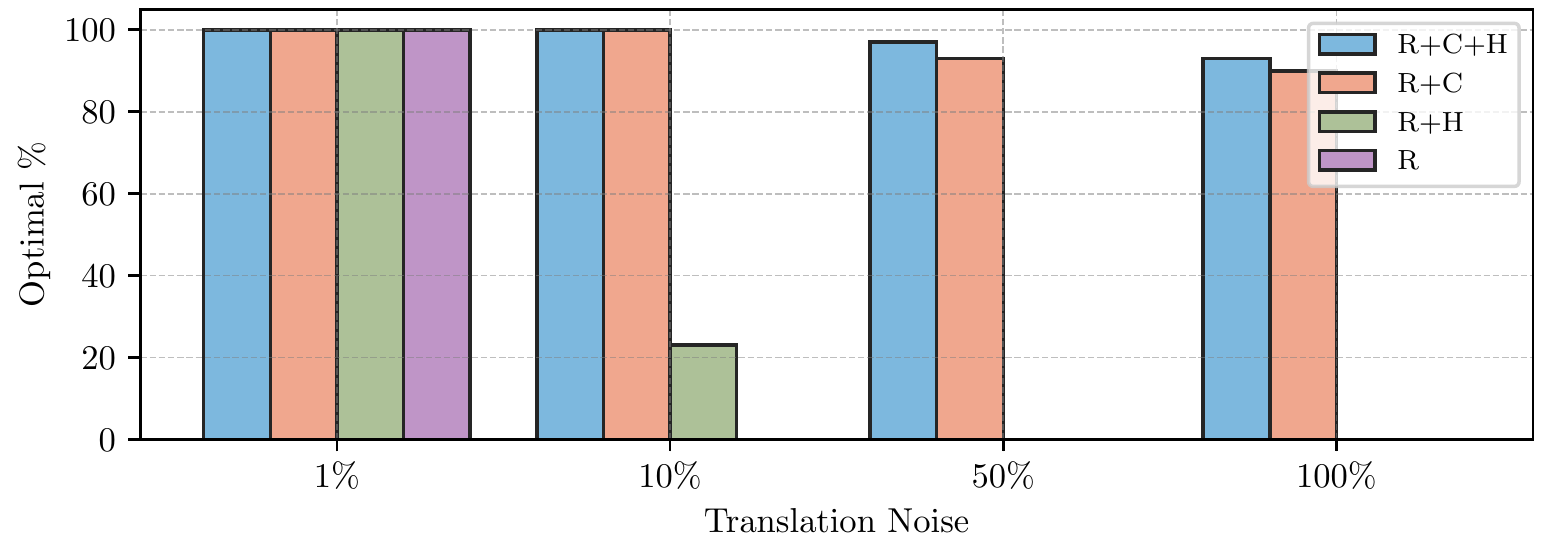}
\caption{A bar plot indicating the percentage of 100 noisy random trials in which the dual SDP solver found the global optimum. Each grouping corresponds to the percentage of translation measurement magnitude used as the standard deviation for the noise added. The legend specifies which combination of $\LieGroupSO{3}$ constraints was used: R is row orthogonality, C is column orthogonality, and H is right-handedness.}
\label{fig:Optimal_plot}
\end{figure}

\begin{figure} 
\centering
\includegraphics[width=0.5\textwidth]{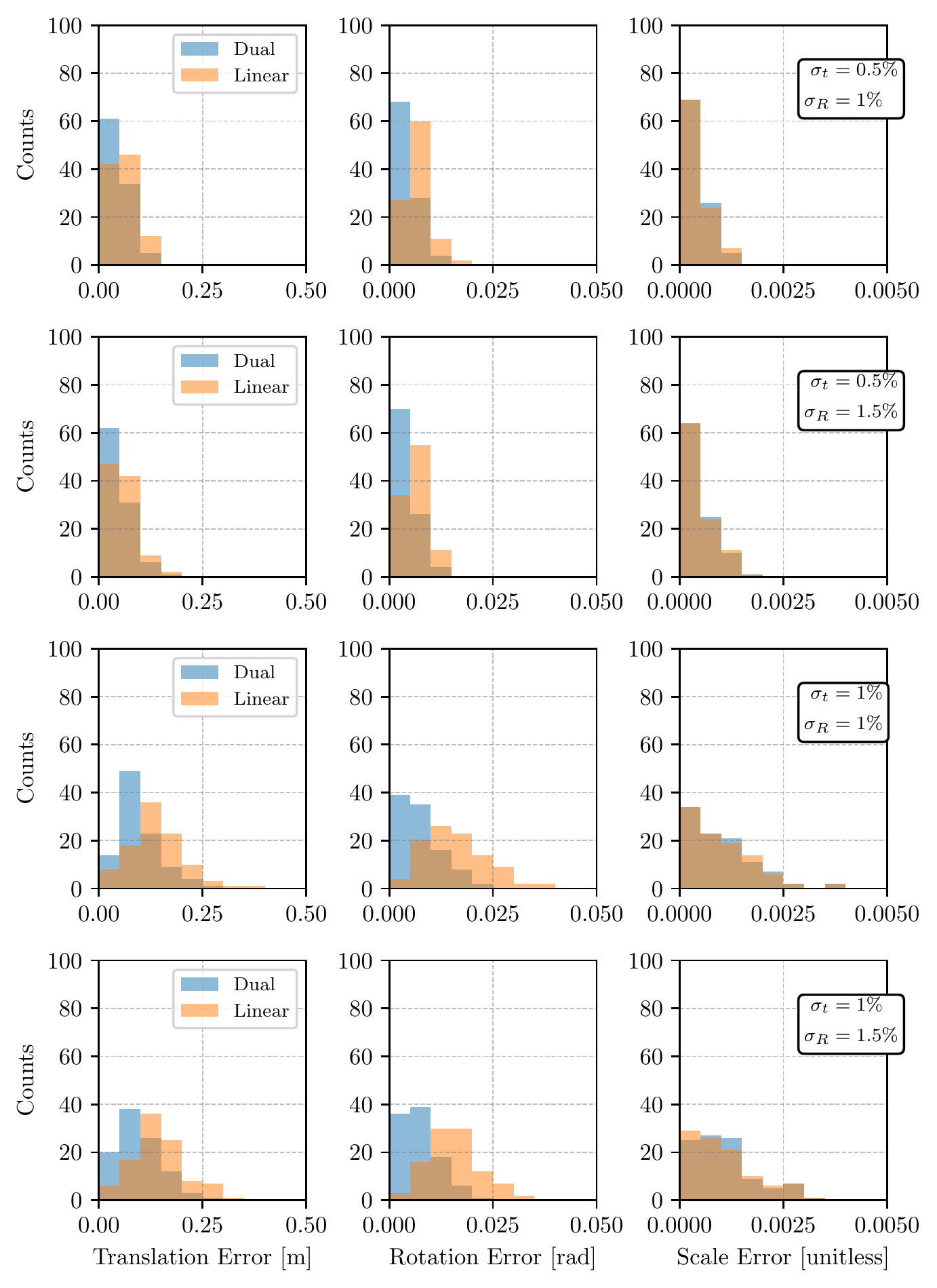}
\caption{Histograms of extrinsic transformation and scale error across simulations with varying noise. Our globally optimal method has far less rotational and translational error than the simple linear approach.}
\label{fig:Error_plot}
\vspace{-0.3cm}
\end{figure}

\section{Conclusion}
\label{sec:conclusion}

We have presented a novel, certifiable hand-eye calibration algorithm with a priori global optimality guarantees. 
Our experiments demonstrated that the zero-duality-gap region is large enough to accommodate severe sensor noise and that our algorithm only takes a few seconds with a generic SDP solver on a laptop. 
This paper focused on the theoretical properties of our algorithm---we intend to include extensive experimentation on real-world datasets in its sequel. 
Additionally, extensions to the closely-related robot-world calibration problem \cite{heller2014hand}, probabilistic cost function variants \cite{ma_probabilistic_2016}, and robust problem formulations \cite{hu_novel_2019} all hold promise for our technique. 
Finally, while the challenge of accurate joint spatiotemporal extrinsic calibration has been explored in the literature \cite{rehder_extending_2016, lambert2017entropy, marr2018unified, hutter_evaluation_2018}, a \emph{certifiable} algorithm has not, to our knowledge, been proposed.

\balance
\bibliographystyle{IEEEtran}
\bibliography{robotics_abbrv,refs}

\end{document}